\newtheorem{definition}{Definition}
\newtheorem{theorem}{Theorem}
\newtheorem{prop}{Proposition}
\begin{document}

%

%

\twocolumn[

\aistatstitle{Expressiveness and Learning of Hidden Quantum Markov Models}

\aistatsauthor{ Sandesh Adhikary*\\
University of Washington
\And Siddarth  Srinivasan*\\Microsoft Research, Montr\'{e}al\\
Georgia Institute of Technology
\AND  Geoff Gordon\\Microsoft Research, Montr\'{e}al 
\And Byron Boots\\
University of Washington\\}
\aistatsaddress{}



  

  ]

\begin{abstract}
    Extending classical probabilistic reasoning using the quantum mechanical view of probability has been of recent interest, particularly in the development of hidden quantum Markov models (HQMMs) to model stochastic processes. However, there has been little progress in characterizing the expressiveness of such models and learning them from data. We tackle these problems by showing that HQMMs are a special subclass of the general class of observable operator models (OOMs) that do not suffer from the negative probability problem by design.
    We also provide a feasible retraction-based learning algorithm for HQMMs using constrained gradient descent on the Stiefel manifold of model parameters. We demonstrate that this approach is faster and scales to larger models than previous learning algorithms.
\end{abstract}
\section{Introduction and Related Work}
Classical probabilistic graphical models provide a principled framework for Bayesian reasoning, and there has been much interest in extending this framework by incorporating the mathematical formalism of quantum mechanics \citep{leifer2008quantum, yeang2010probabilistic, leifer2013towards, warmuth2014bayesian}. Hidden quantum Markov models (HQMMs) 
\citep{monras2010hidden, clark2015hidden, srinivasan2018}, have been some of the more well-investigated models; recent work by \citet{srinivasan2018} showed that every finite-dimensional hidden Markov model (HMM) can also be modeled by a finite-dimensional HQMM, and empirically demonstrated some theoretical advantages of HQMMs over HMMs.  A major motivation for investigating such `quantum models' has been the promise of a more general and expressive class of probabilistic models. Yet, a clear characterization of the expressiveness of these models and a practical learning algorithm has remained lacking. These are precisely the problems we tackle in this paper.\let\thefootnote\relax\footnotetext{* denotes equal contribution}

Our theoretical exploration of HQMMs is primarily centered around their relationship to the observable operator models (OOMs) developed by \citet{jaeger2000observable}. OOM-equivalents have been independently developed and are also referred to in the literature as uncontrolled predictive state representations (PSRs) \citep{SinghPSR}, linearly dependent processes \citep{ito1992identifiability}, and stochastic weighted automata \citep{balle2014methods, Thon2015}. OOMs can be seen as a generalization of the well-known hidden Markov models \citep{rabiner1986introduction}, but despite their generality they lack a constructive definition. A valid OOM must never produce a negative probability for a sequence of observations, yet it is \emph{undecidable} \citep{Wiewiora2007ModelingPD} whether or not candidate set of OOM parameters will yield negative probabilities. This is known as the \emph{negative probability problem} (NPP) of OOMs, and must be handled with heuristics in practice \citep{cohen2013experiments}.  
An alternative approach is to construct models that avoid the NPP by design, such as norm-observable operator models (NOOMs) \citep{Zhao2010NormObservableOM} or quadratic weighted automata \citep{bailly2011quadratic}. While NOOMs can simulate processes that no finite-dimensional HMM could model (such as the `probability clock' \citep{Zhao2010NormObservableOM}), it is unclear whether they have the broad expressiveness of OOMs; it isn't even known if they contain HMMs as a subclass. In this context, we make three main theoretical contributions in this paper: (i) we show how  HQMMs can be seen as a generalization of NOOMs, (ii) we formulate the Liouville representation of HQMMs which uniquely characterizes the model and allows for direct comparison between HQMMs, and (iii) we show that every finite-dimensional HQMM is equivalent to a finite-dimensional OOM, with the special property that we can characterize the valid initial states as the spectraplex of Hermitian PSD matrices with trace 1.

We also present results on learning these models from data. We use the Kraus operator parameterization of HQMMs using matrices $\{\mathbf{K}_i\}$ that satisfy the constraint $\sum_i \mathbf{K}_i^\dagger \mathbf{K}_i = \mathds{I}$. Stacking the operators $\mathbf{K}_i$ vertically to form a matrix $\pmb{\kappa}$, the constraint can be re-written as $\pmb{\kappa}^\dagger \pmb{\kappa} = \mathds{I}$. The existing approach to learning HQMMs~\citep{srinivasan2018} yields feasible parameters by starting with an initial guess $\mathbf{\kappa}$ and iteratively finding unitary transformations that increase the likelihood of the data. However, this method is inefficient, often gets trapped in poor optima, and can only handle a small number of hidden states. The absence of a practical learning algorithm has been a bottleneck in the development of these models \citep{schuld2015introduction}. Our primary experimental contribution in this paper is the application and analysis of a viable approach to the learning problem: since $\pmb{\kappa}$ lies on the Stiefel manifold \citep{stiefel-original,Edelman1998}, we can directly learn feasible parameters by constraining gradient updates to lie on the manifold using a well-known retraction-based algorithm \citep{Wen2013}. We show that this approach is faster, finds better optima, and can handle more hidden states than the previous method. 

\section{The Expressiveness of HQMMs}
In general, the models we discuss are used to model sequential data and assume an evolving latent state that emits discrete observations at each time-step. We describe HMMs, OOMs, and NOOMs, and show how HQMMs can be derived as a generalization of NOOMs.  
\subsection{Hidden Markov Models} 
\vspace{-2mm}
\begin{definition}[HMMs]
\label{def:hmm}
An $n$-dimensional Hidden Markov Model with a set of discrete observations $\mathcal{O}$ is a tuple ($\mathds{R}^n$, $\mathbf{A}$, $\mathbf{C}$, $\vec{x}_0$) where initial state $\vec{x}_0$, transition matrix $\mathbf{A}$, and emission matrix $\mathbf{C}$ satisfy the following conditions:
\begin{enumerate}[(i)]
    \item Non-negative parameters: $\vec{x}_0 \in \mathds{R}^n_{\geq 0}$, $\mathbf{A} \in \mathds{R}^{n \times n}_{\geq 0}$, $\mathbf{C} \in \mathds{R}^{|\mathcal{O}| \times n}_{\geq 0}$, 
    \item Normalized initial state: $\vec{1}^{T}\vec{x}_0 = 1$,
    \item Column-stochastic operators: $\vec{1}^T\mathbf{A} = \vec{1}^T\mathbf{C} = \vec{1}^T$.
\end{enumerate}
\end{definition}

HMM belief states are always interpretable as probability distributions over hidden system states. 

At each time-step, we update the belief state and condition on observation using the column-stochastic matrices ${\bf A}$ and ${\bf C}$ respectively:
\begin{equation}\label{eq:hmm}
    \vec{x}_{t}' = {\bf A}\vec{x}_{t-1}
    ~~~~~~~~~~~~~
    \vec{x}_{t} = \frac{\text{diag}({\bf C}_{(y,:)})\vec{x}_{t}'}{\vec{1}^T \text{diag}({\bf C}_{(y,:)})\vec{x}_{t}'},
\end{equation}
where diag$\left(\mathbf{C}_{y,:}\right)$ places the row $y$ of matrix $\mathbf{C}$ in a diagonal matrix. We can also compute the probability of a sequence of observations $\bar{y} = y_1, \ldots, y_t$ from a given belief state $\vec{x}$ as follows:
\begin{small}
\begin{equation}
    P(\bar{y}) = \vec{1}^T \text{diag}({\bf C}_{(y_t,:)})\mathbf{A}~\cdots~\text{diag}({\bf C}_{(y_1,:)})\mathbf{A}\vec{x}
\end{equation}
\end{small}
\vspace{-8mm}
\subsection{Observable Operator Models} 
We describe OOMs as a generalization of HMMs. Observe that the operations above can be equivalently represented by defining observable operators $\mathbf{T}_y = \text{diag}({\bf C}_{(y,:)})\mathbf{A}$ for each observation $y$:
\begin{small}
\begin{equation}
\label{obs:oom}
    \vec{x}_{t} = \frac{{\bf T}_y\vec{x}_{t-1}}{\vec{1}^T {\bf T}_y\vec{x}_{t-1}}
       ~~~~~~~~~~~~~
    P(\bar{y}) = \vec{1}^T {\bf T}_{y_t}~\cdots~{\bf T}_{y_1} \vec{x}
\end{equation}
\end{small}
We can arrive at OOMs by relaxing constraint (i) in Definition \ref{def:hmm} (so entries in $\vec{x}$, $\mathbf{A}$, $\mathbf{C}$ can be negative) and requiring only that the model always assign non-negative probabilities to observations. This allows us to define a standard OOM as follows:

\begin{definition}[Standard OOMs \citep{jaeger2000observable}]
\label{def:s_ooms} An $n$-dimensional standard Observable Operator Model with a set of discrete observations $\mathcal{O}$ is a tuple $(\mathbb{R}^n, \{\mathbf{T}_y\}_{y \in \mathcal{O}}, \vec{x}_0)$ where initial state $\vec{x}_0 \in \mathbb{R}^n$ and observable operators $\{\mathbf{T}_y\}_{y \in \mathcal{O}} \in \mathbb{R}^{n \times n}$ satisfy the following constraints:

\begin{enumerate}[(i)]
\item Normalized initial state: $\vec{1}^T \vec{x}_0 = 1$, 
\item Normalized marginal over observations: $\vec{1}^T\sum_{y \in \mathcal{O}} \mathbf{T}_y = \vec{1}^T$, 
\item Non-negative probabilities: $\vec{1}^T \mathbf{T}_{y_{t}}\dots\mathbf{T}_{y_{1}} \vec{x}_0 \geq 0$ for all sequences $y_{1} \dots y_{t}$.
\end{enumerate}
\end{definition}
\vspace{-2mm}
Note that the above definition is non-constructive since it does not tell us what constraints we could place on model parameters or initial states to satisfy condition (iii) -- this is the cost of relaxing the non-negativity constraint. 

In fact, it is \emph{undecidable} whether a given candidate OOM ($\mathbb{R}^n$, $(\mathbf{T}_y)_{y \in \mathcal{O}}$, $\vec{x}_0$) satisfying conditions (i)-(ii) will violate condition (iii) \citep{Wiewiora2007ModelingPD}. This is the root of the infamous negative probability problem (NPP) in OOMs, since we cannot identify whether a learned model will assign negative probabilities to observations. %

\citet{jaeger2000observable} further showed that HMM $\subset$ OOM using the `probability clock' OOM which requires an infinite-dimensional HMM to model. The non-negativity constraint (i) from Definition \ref{def:hmm} forces the largest eigenvalue of an observable operator $\mathbf{T}_y$ of an HMM to be real (by the Perron-Frobenius theorem). However, negative entries in OOMs allow the largest eigenvalue to be complex, which allows the latent states (and hence conditional probabilities) to display oscillatory behaviour. \citet{jaeger2000observable} uses this property in their probability clock example.

A useful conceptual characterization of a candidate OOM with parameters $\{\mathbf{T}_y\}_{y\in\mathcal{O}}$ is the convex cone of valid initial states it admits, i.e., the initial states for which the model will never assign a negative probability for observations. If there is no such cone, the model is invalid. Indeed, \cite{jaeger2000observable} present the following alternative to condition (iii):
\begin{prop}[\citet{jaeger2000observable}] A tuple $(\mathbb{R}^n, (\mathbf{T}_y)_{y \in \mathcal{O}}, \vec{x}_0)$ satisfying conditions (i)-(ii) of Definition \ref{def:s_ooms} is an OOM if and only if there exists a pointed convex cone $K$ such that:
\begin{enumerate}[(i)]
    \item Initial state is in the cone: $\vec{x}_0 \in K$,
    \item Cone is closed under the operators: $\mathbf{T}_y\vec{x} \in K$ for all $\vec{x} \in K$ and $y \in \mathcal{O}$,
    \item 
    The sum of entries for any point in the cone is non-negative:
    $\vec{1}^T\vec{x} \geq 0$ for all $\vec{x} \in K$.
    \end{enumerate}
\label{prop:oom_cones}
\end{prop}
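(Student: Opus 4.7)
The plan is to prove each direction of the biconditional separately.

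For the \emph{if} direction, assume a pointed convex cone $K$ satisfying the three stated properties exists. I would argue by induction on sequence length that every intermediate state $\vec{x}_t \equiv \mathbf{T}_{y_t}\cdots\mathbf{T}_{y_1}\vec{x}_0$ lies in $K$. The base case is property~(i), and the inductive step uses closure under the operators, property~(ii). Property~(iii) then yields $\vec{1}^T \vec{x}_t \geq 0$, which is exactly condition~(iii) of Definition~\ref{def:s_ooms}. This direction is essentially immediate.

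For the \emph{only if} direction, I would construct $K$ as the set of all $\vec{x} \in \mathbb{R}^n$ for which $\vec{1}^T \mathbf{T}_{y_t}\cdots\mathbf{T}_{y_1}\vec{x} \geq 0$ for every finite (possibly empty) observation sequence. This is an intersection of closed half-spaces through the origin, hence a closed convex cone. Property~(i) of the proposition holds by the OOM hypothesis that every sequence probability from $\vec{x}_0$ is non-negative; property~(ii) holds because applying $\mathbf{T}_y$ to a vector in $K$ merely prepends one more observation to each test sequence, which is again a valid test; property~(iii) is exactly the empty-sequence constraint $\vec{1}^T \vec{x} \geq 0$.

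The main obstacle is guaranteeing that $K$ can be chosen to be \emph{pointed}, i.e.\ $K \cap (-K) = \{\vec{0}\}$. Any $\vec{v}$ in the lineality subspace $N := K \cap (-K)$ satisfies $\vec{1}^T \mathbf{T}_{y_t}\cdots\mathbf{T}_{y_1}\vec{v} = 0$ for every sequence, so $N$ is $\mathbf{T}_y$-invariant and predictively null. The natural remedy is to pass to a $\mathbf{T}_y$-invariant complementary subspace $W$ with $\mathbb{R}^n = N \oplus W$ and work with $K \cap W$, replacing $\vec{x}_0$ by its $W$-component (which produces identical sequence probabilities since $N$ is null). I expect establishing existence of such an invariant complement --- in particular handling the non-semisimple case by, for instance, iteratively peeling off generalized eigenspaces of the $\mathbf{T}_y$ restricted to $N$ --- to be the technical crux; the rest of the argument is standard linear algebra and convex analysis.
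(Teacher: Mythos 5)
The paper does not actually prove this proposition: it is imported verbatim from \citet{jaeger2000observable}, and the only justification offered in the text is the one-sentence gloss of the ``if'' direction (states starting in the cone stay in the cone under the $\mathbf{T}_y$, and every state in the cone evaluates to a non-negative number). Your ``if'' direction is exactly that argument, correctly spelled out by induction, so that half is fine. Your construction of $K$ for the ``only if'' direction --- the intersection, over all finite sequences, of the half-spaces $\{\vec{x} : \vec{1}^T\mathbf{T}_{y_t}\cdots\mathbf{T}_{y_1}\vec{x}\ge 0\}$ --- is also the standard one, and it does verify properties (i)--(iii).

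The gap is in the pointedness repair. First, a common $\mathbf{T}_y$-invariant complement of the lineality space $N$ need not exist: already for a single non-semisimple operator an invariant subspace can lack an invariant complement (a nilpotent Jordan block does this), and ``peeling off generalized eigenspaces'' is a single-operator technique --- the $\mathbf{T}_y$ for different $y$ need not commute, so they need not share generalized eigenspaces, let alone a common invariant complement of $N$. Second, even granting such a $W$, replacing $\vec{x}_0$ by its $W$-component changes the tuple: you would then have exhibited a pointed cone for a different (equivalent, reduced) OOM, whereas the proposition asserts the existence of a pointed cone for the given tuple $(\mathbb{R}^n,(\mathbf{T}_y)_{y\in\mathcal{O}},\vec{x}_0)$ itself. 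A route that avoids both problems is to take $K$ to be the conic hull of the reachable set $\{\mathbf{T}_{y_t}\cdots\mathbf{T}_{y_1}\vec{x}_0\}$: it contains $\vec{x}_0$, is closed under every $\mathbf{T}_y$, and sits inside your maximal cone, hence inherits (iii); applying $\vec{1}^T\mathbf{T}_{y_t}\cdots\mathbf{T}_{y_1}$ to any conic representation of a point of its lineality space shows that such a point must be a conic combination of reachable states lying in $N$, which isolates exactly what still has to be ruled out. As written, though, the invariant-complement argument does not go through, so the ``only if'' direction is incomplete.
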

Conditions (i) and (ii) guarantee that any initial state inside such a cone will stay inside the cone under action of $\mathbf{T}_y$, and condition (iii) guarantees that any state inside the cone will evaluate to a non-negative probability. This characterization can also tell us which OOMs have equivalent HMMs: a finite-dimensional OOM has an equivalent finite-dimensional HMM if and only if $K$ is a $k$-polyhedral cone for some $k$, i.e., it is generated by some finite set of vectors \citep{jaeger2000observable}. Proposition~\ref{prop:oom_cones} also gives us a recipe to find OOMs that do not suffer from the NPP: select a desired convex cone of valid initial states and construct operators such that the cone is closed under their action.

\paragraph{General OOMs} The standard OOMs given in Definition \ref{def:s_ooms} are the original formulation by \citet{jaeger2000observable}, which is stricter than necessary. Various equivalent formulations have been proposed, including as Sequential Systems (SS) by \citet{Thon2015}, uncontrolled predictive state representations (PSRs), or stochastic weighted automata \citep{balle2014methods}. In this paper, we refer to the these as `general OOMs'. The main difference is that the model parameters are no longer constrained to be real, and we don't force the state entries to sum to one; instead the state can be any vector as long as we can use a linear functional $\sigma$ (which for standard OOMs was fixed to be $\vec{1}^T$) to recover the probabilities. While the model parameters can be defined over arbitrary fields, we define general OOMs over the complex field as this allows us to eventually recover HQMMs.

\begin{definition}[General OOMs \citep{Thon2015}] 
\label{def:ldp}
An $n$-dimensional general Observable Operator Model with a set of discrete observations $\mathcal{O}$ is a tuple $(\mathbb{C}^n, (\pmb{\tau}_y)_{y \in \mathcal{O}}, \vec{x}_0, \sigma)$ where initial state $\vec{x}_0 \in \mathbb{C}^n$, observable operators $\{\mathbf{\tau}_y\}_{y \in \mathcal{O}} \in \mathds{C}^{n \times n}$, and a linear evaluation functional $\vec{\sigma} \in \mathds{C}^{n}$ satisfy the following constraints:
\begin{enumerate}[(i)] 
\item Normalized Initial State: $\vec{\sigma}^\dagger\vec{x}_0 = 1 $, 
\item Normalized marginal over observations:
$\vec{\sigma}^\dagger\pmb{\tau}_{y_t}\ldots\pmb{\tau}_{y_1}x_0$ = $\sum_{y \in \mathcal{O}} \vec{\sigma}^\dagger\pmb{\tau}_y\pmb{\tau}_{y_t}\ldots\pmb{\tau}_{y_1}\vec{x}_0$ for all sequences ${y_1}\ldots {y_t}$,
\item Non-negative probabilities: $\vec{\sigma}^\dagger \pmb{\tau}_{y_{t}}\dots\pmb{\tau}_{y_{1}} \vec{x}_0 \in [0, 1]$ for all sequences $y_{1} \dots y_{t}$.
\end{enumerate}
\end{definition}
For such a model, the state update after observing $y \in \mathcal{O}$ and computing the probability of that observation are carried out as follows:\footnote{$\dagger$ is the complex conjugate transpose}
\begin{equation}
\label{soom}
    \vec{x}_{t} = \frac{{\pmb{\tau}}_y\vec{x}_{t-1}}{{\vec{\sigma}}^\dagger {\pmb{\tau}}_y\vec{x}_{t-1}}
       ~~~~~~~~~~~~~
    P(\bar{y}) = \vec{\sigma}^\dagger \pmb{\tau}_{y_t} ~\cdots~ \pmb{\tau}_{y_1} \vec{x}
\end{equation}
As shown in Proposition 13 of \citet{Thon2015}, every $n$-dimensional general OOM has an equivalent standard OOM that is a similarity transform away, i.e., we can find a similarity transform $\mathbf{S}$ such that $(\mathbb{C}^n, (\mathbf{S}\,\pmb{\tau}_y \mathbf{S}^{-1})_{y \in \mathcal{O}},\mathbf{S}\,\vec{\omega}_0, \vec{\sigma}\, \mathbf{S}^{-1}) = (\mathbb{C}^n, (\mathbf{T}_y)_{y \in \mathcal{O}}, \vec{v}_0, \vec{1}^T)$.  
We will use this equivalence to show that NOOMs and HQMMs are special cases of OOMs.
Finally, we note that finite dimensional OOMs are the most expressive class of linear models capable of modeling any stochastic process whose `system-dynamics' matrix \citep{SinghPSR} has finite rank \citep{Zhao2010b}. Hence these models are extremely powerful, although the NPP makes it challenging to use these models in practice. 
\vspace{-2mm}
\subsection{Norm-observable Operator Models}\vspace{-2mm}
NOOMs represent a class of models designed to avoid the NPP by construction. The central idea is to wrap the output of the model with the non-linear function $\|\cdot\|^2$ so that it always returns non-negative values.  

\begin{definition}[NOOMs \citep{Zhao2010b}]
\label{def:nooms}
An $n$-dimensional Norm Observable Operator Model with a set of discrete observations $\mathcal{O}$ is a tuple $(\mathbb{R}^n, (\pmb{\phi}_y)_{y \in \mathcal{O}}, \vec{v}_0)$ where initial state $\vec{v}_0 \in \mathbb{R}^n$ and observable operators $\{\pmb{\phi}_y\}_{y \in \mathcal{O}} \in \mathbb{R}^{n \times n}$ satisfy the following constraints:
\begin{enumerate}[(i)]
\item Normalized initial state: $\|\vec{v}_0\|_2^2 = 1$, 
\item Normalized marginal over observations: $\sum_{y \in \mathcal{O}}~\pmb{\phi}_y^\dagger~\pmb{\phi}_y = \mathbb{I}$.
\end{enumerate}
\end{definition}
The updated state after observing $y \in \mathcal{O}$ and the probability of that observation can be computed as
 \begin{small}
\begin{equation}
    \vec{v}_{t} = \frac{{ \pmb{\phi}}_y\vec{v}_{t-1}}{\| \pmb{\phi}_{y_t}~\dots~ \pmb{\phi}_{y_1} \vec{v}\|}
       ~~~~~~~~~~~~~
    P(\bar{y}) = \| \pmb{\phi}_{y_t}~\dots~ \pmb{\phi}_{y_1} \vec{v}\|^2
\label{eq:NOOM}
\end{equation}
\end{small}
Although any stochastic process can be represented as a NOOM in some inner product space, this space may be infinite dimensional \citep{Zhao2010b}. For practical purposes, we care about the expressiveness of finite-dimensional NOOMs. \citet{Zhao2010b} showed that NOOM $\subseteq$ OOM, and once again used the ability of a real-valued NOOM operator to have complex eigenvalues in a NOOM probability clock to show that there are finite-dimensional NOOMs that cannot be modeled exactly by finite-dimensional HMMs. 

\citet{Zhao2010b} show that despite its non-linear form, NOOMs are equivalent to $n^2$-dimensional OOMs, and indeed we will build upon this approach to re-derive HQMMs. \citet{Zhao2010NormObservableOM} use Kronecker product relationships for the 2-norm (where $\vec{\mathds{I}}$ is a vectorized identity matrix that implements a matrix trace operation) to show that sequence probabilities in a NOOM from Equation \ref{eq:NOOM} can also be evaluated as:
\begin{align}
    P(\bar{y}) = \vec{\mathds{I}}_{n^2}^T ~\left(\pmb{\phi}_{y_n}\otimes~\pmb{\phi}_{y_n}\right)~\ldots~\left(\pmb{\phi}_{y_1} \otimes~\pmb{\phi}_{y_1}\right) \left(\vec{v}_0~\otimes~\vec{v}_0\right),
    \label{eq:noom_kron}
\end{align}
Now, if we define $\vec{\sigma} = \vec{\mathds{I}}_{n^2}$,
$\pmb{\tau}_y = \pmb{\phi}_{y} \otimes \pmb{\phi}_{y}$, and the initial state $\vec{\omega}_0 \in \mathds{R}^{n^2}$ as $\vec{\omega}_0 = \vec{v}_0~\otimes~\vec{v}_0$, we get a general OOM ($\mathbb{C}^n$, $(\pmb{\tau}_y)_{y \in \mathcal{O}}$, $\vec{\omega}_0$, $\vec{\sigma}$). As shown by \citet{Zhao2010b}, this is a similarity transform of a standard OOM, with $\mathbf{S} = \mathds{I}_{n^2} + \frac1{n^2}\vec{\mathbf{1}}_{n^2}(\vec{\sigma}^T - \vec{\mathbf{1}}^T_{n^2})$. Thus, NOOMs are not any more expressive than OOMs, i.e., NOOM $\subseteq$ OOM. 
\subsection{Hidden Quantum Markov Models }
\label{sec:hqmms}
Previous work by \citet{srinivasan2018} derived HQMMs by generalizing HMMs using system-environment interactions (illustrated using a quantum circuit), and showed that every $n$-dimensional HMM can be modeled by an HQMM with no more than an $n^2$-dimensional hidden states. Here, we take a different approach; we will show how HQMMs can be defined through a series of natural generalizations of NOOMs in such a way that they also end up containing finite-dimensional HMMs. We do so by allowing parameters to be complex and expanding the concepts of NOOM states and operators using the representation in Equation \ref{eq:noom_kron}.

\paragraph{Generalizing NOOM States}

We know from Equation~\ref{eq:noom_kron} that the initial state $\vec{\omega}_0$ can viewed as a vectorized rank-1 Hermitian matrix $\pmb{\rho}$, i.e., $\vec{\omega} = \text{vec}\left(\vec{v}_0\vec{v}_0^\dagger\right) = \text{vec}(\pmb{\rho})$. A natural generalization would be to let the initial state be a vectorized matrix of arbitrary rank, i.e., $\pmb{\rho}_0 = \sum_i p_i \vec{v}_i \vec{v}_i^\dagger$ instead. The normalization condition on the initial state can then be restated as $1 = \vec{\sigma}^\dagger \vec{\rho}_0 = \vec{\mathds{I}}_{n^2}^T \vec{\rho}_0 = \text{tr}\left(\pmb{\rho}_0\right) = \sum_i p_i$. 

As a linear combination of outer products of vectors with themselves, $\pmb{\rho}$ must be Hermitian. We additionally assume that the constituent eigenvectors live in a Hilbert space $\mathcal{H}$, so that $\pmb{\rho}$ lives in a Liouville space, i.e., the outer product of two Hilbert spaces. Further, in the NOOM, $\vec{v}_0\vec{v}_0^\dagger$ had a single eigenvalue of 1. If we impose no further constraints, we could allow $p_i$ to be complex-valued or negative as long as the normalization condition above was satisfied. However, this could once again lead to negative probabilities when applying the evaluation $\vec{\sigma}$, and hence a non-constructive model. Thus, we impose a positive semi-definiteness (PSD) constraint on the initial state to guarantee that $p_i \in \mathds{R}_{\geq 0}$ so that $\text{tr}(\pmb{\rho}_0)$ is real and non-negative. Essentially, we are now considering a model whose initial states $\vec{\rho}$ are vectorized arbitrary-rank Hermitian PSD matrices, which constitute a pointed convex cone. Such matrices are called \emph{density matrices} in quantum mechanics \citep{nielsen_chuang_2010}, and the imposition of the PSD constraint on the states is what allows these models to avoid the NPP.
\vspace{-2mm}
\paragraph{Generalizing NOOM operators}
Having defined a convex cone of valid states, we now derive operators that ensure that the state always evolves inside the cone. We refer to such operators acting on our states in Liouville space as \emph{Liouville superoperators} $\{\mathbf{L}_y\}_{y \in \mathcal{O}}$. Condition (ii) in Definition \ref{def:nooms} ensured that probabilities of observations computed by the NOOM were normalized, and the equivalent condition in the OOM representation in Equation \ref{eq:noom_kron} is that $\vec{\sigma}^\dagger \left(\sum_{y\in\mathcal{O}} \pmb{\tau}_y\right) = \vec{\sigma}^\dagger$. We impose a similar constraint (trace preservation or TP) on the superoperators to ensure we get a normalized distribution over observations. In addition to this, we further need to ensure that the probabilities assigned to observations are real and non-negative, i.e., the operators must always preserve the Hermitian PSD condition of the state. Finding a constructive way to impose these restrictions on Liouville superoperators is challenging, and it is easier to do so on the `reshuffled' version of it called its \emph{Choi matrix} \citep{Wood2011}. The reshuffle operation (Figure \ref{fig:cp_map_forms}) involves reshaping the $n^2-$dimensional columns of the Liouville superoperator into $n \times n$ matrices. Going across the columns of $\mathbf{L}$ from left to right, we fill up the blocks of the Choi matrix column-first with these reshaped matrices (see \citet{Zyczkowski2004} for further details). In the context of Hermitian preserving (HP) maps, there is no elegant way to also impose a simple PSD-preserving `positivity' constraint \citep{Choi1975CompletelyPL, Pillis1967}. Therefore, we must impose a slightly more restrictive complete positivity (CP) constraint which guarantees that the map $\mathbf{L}_y \otimes \mathds{I}$ is PSD-preserving for identity matrices of any dimension. In fact, \citet{Choi1975CompletelyPL} suggest that a CP map is the natural constructive generalization of `positivity' for a linear HP map. We define L-HQMMs as a generalization of NOOMs with these constraints:
\begin{definition}[L-HQMMs]
\label{def:lhqmm}
An $n^2$-dimensional Liouville-Hidden Quantum Markov Model with a set of discrete observations $\mathcal{O}$ is a tuple $(\mathbb{C}^{n^2}, (\mathbf{L}_y)_{y \in \mathcal{O}}, \vec{\rho}_0, \vec{\mathds{I}})$ where the initial state $\vec{\rho}_0 \in \mathbb{C}^{n^2}$ and Liouville superoperators $\{\mathbf{L}_y\}_{y \in \mathcal{O}} \in \mathds{C}^{n^2 \times n^2}$ with corresponding Choi matrices $\{\mathbf{C}_y\}_{y\in\mathcal{O}}$ satisfy the following constraints:
\begin{enumerate}[(i)]
\item $\vec{\rho}_0$ is a vectorized Hermitian PSD matrix of arbitrary rank, 
\item Normalized initial state: $\vec{\mathds{I}}^T\vec{\rho}_0 = 1 $, 
\item  CP: $\mathbf{C}_y \geq 0$ (Choi matrix is PSD).
\item TP:  $\vec{\mathds{I}}^T\left(\sum_{y\in \mathcal{O}} \mathbf{L}_y\right) = \vec{\mathds{I}}^T$, 
\item HP: $\mathbf{C}_y = \mathbf{C}_y^\dagger$,
\end{enumerate}
\end{definition}
For such a model, the state update after observing $y \in \mathcal{O}$ and computing the probability of that observation are:
\begin{small}
\begin{equation}
\label{loom}
    \vec{\rho}_{t} = \frac{{ \mathbf{L}}_y\vec{\rho}_{t-1}}{{\vec{\mathds{I}}}^T {\mathbf{L}}_y\vec{\rho}_{t-1}}
       ~~~~~~~~~~~~~
    P(\bar{y}) = \vec{\mathds{I}}^T \mathbf{L}_{y_t}~\dots~ \mathbf{L}_{y_1} \vec{\rho}
\end{equation}
\end{small}
The exact relationship between HQMMs and OOMs was previously unknown, but this formulation of HQMMs allows us to state an important result:
\begin{theorem}
HQMM $\subseteq$ OOM, and the set of valid initial states for HQMMs is a spectraplex.
\label{theorem:LHQMM_OOM}
\end{theorem}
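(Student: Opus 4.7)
The plan is to prove the containment by exhibiting every L-HQMM from Definition \ref{def:lhqmm} as a general OOM in the sense of Definition \ref{def:ldp}, and then invoking the cited result of \citet{Thon2015} that every finite-dimensional general OOM is similarity-equivalent to a standard OOM. The spectraplex characterization will then follow by direct inspection of the initial-state conditions.

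Concretely, I would set $\vec{\sigma} = \vec{\mathds{I}}$ and $\pmb{\tau}_y = \mathbf{L}_y$ and verify the three conditions of Definition \ref{def:ldp}. Normalized initial state is exactly L-HQMM condition (ii). Normalized marginal over observations follows immediately from TP: the identity $\vec{\mathds{I}}^T\bigl(\sum_y \mathbf{L}_y\bigr) = \vec{\mathds{I}}^T$ can be inserted inside any product of operators to give the telescoping equality between marginals and joint probabilities required by condition (ii) of Definition \ref{def:ldp}.

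The substantive step is the non-negativity/boundedness condition (iii). Here I would use the defining property of the Choi representation: a Liouville superoperator $\mathbf{L}_y$ with Hermitian PSD Choi matrix $\mathbf{C}_y$ is a completely positive map on matrices and hence, in particular, a positive map sending Hermitian PSD matrices to Hermitian PSD matrices \citep{Choi1975CompletelyPL}. Since $\pmb{\rho}_0$ is Hermitian PSD by L-HQMM condition (i), an easy induction shows that each unnormalized intermediate state $\mathbf{L}_{y_k}\cdots\mathbf{L}_{y_1}\vec{\rho}_0$ is the vectorization of a Hermitian PSD matrix, and evaluating $\vec{\mathds{I}}^T$ against such a vectorized matrix returns its trace, which is non-negative. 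The upper bound of $1$ then drops out of TP together with trace normalization of the initial state: summing the probabilities of all length-$t$ extensions telescopes back to $\vec{\mathds{I}}^T\vec{\rho}_0 = 1$, forcing any individual sequence probability into $[0,1]$.

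Finally, the set of valid initial states is exactly $\{\pmb{\rho}\in\mathbb{C}^{n\times n} : \pmb{\rho}=\pmb{\rho}^\dagger,\ \pmb{\rho}\succeq 0,\ \mathrm{tr}(\pmb{\rho})=1\}$, which is the spectraplex of trace-one Hermitian PSD matrices; this is a one-line reading of L-HQMM conditions (i)--(ii) after identifying $\vec{\mathds{I}}^T\vec{\rho}_0 = \mathrm{tr}(\pmb{\rho}_0)$. The main obstacle I anticipate is the non-negativity step, i.e., cleanly arguing that the Choi-PSD condition on $\mathbf{L}_y$ transfers to PSD-preservation of the induced action on vectorized states; this is standard in the theory of completely positive maps but requires being careful with the reshuffle/Choi correspondence used throughout the paper.
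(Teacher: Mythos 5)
Your proposal is correct and follows essentially the same route as the paper's proof: set $\vec{\sigma}=\vec{\mathds{I}}$, use TP for the normalized marginal, use the HP/CP (Choi-PSD) conditions to show each unnormalized state remains a vectorized Hermitian PSD matrix so that the evaluation functional returns a non-negative trace, bound the probability by $1$ via the TP identity, and read off the spectraplex from the trace-one PSD initial-state conditions. The only cosmetic difference is that you phrase the upper bound as a telescoping sum over all length-$t$ extensions, whereas the paper states the single-step inequality $\vec{\mathds{I}}^T\mathbf{L}_y\vec{\rho}_0 \leq \vec{\mathds{I}}^T\bigl(\sum_{y}\mathbf{L}_y\bigr)\vec{\rho}_0 = 1$ directly; these are the same argument.
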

\begin{proof} Setting $\vec{\sigma} = \vec{\mathds{I}}$, L-HQMMs satisfy condition (i) of General OOMs laid out in Definition \ref{def:ldp} by construction. Condition (ii) of Definition \ref{def:ldp} is satisfied by the TP constraint on L-HQMMs. Next, the HP and CP constraints on L-HQMMs guarantee that $\mathbf{L}_{\bar{y}}\vec{\rho}$ always yields a vectorized Hermitian PSD matrix. The trace of this matrix is always real and non-negative, i.e., $\vec{\mathds{I}}^T\mathbf{L}_{\bar{y}}\vec{\rho} \geq 0$. We also have $\vec{\mathds{I}}^T\mathbf{L}_y\vec{\rho}_0 \leq \vec{\mathds{I}}^T\left(\sum_{y\in \mathcal{O}} \mathbf{L}_y\right) \vec{\rho}_0 = \vec{\mathds{I}}^T\vec{\rho}_0 = 1$, satisfying condition (iii) of Definition \ref{def:ldp}.

The valid initial states of L-HQMMs are Hermitian PSD matrices with unit trace. Hermitian PSD matrices form a convex cone, and the intersection of this cone with the linear affine subspace of trace 1 matrices is a spectrahedron known as a spectraplex.
\end{proof}
Using the same similarity transform that we used for NOOMs $\mathbf{S} = \mathds{I}_{n^2} + \frac1{n^2}\vec{\mathbf{1}}_{n^2}(\vec{\sigma}^T - \vec{\mathbf{1}}^T_{n^2})$, we can transform any $n^2$-dimensional L-HQMM into an equivalent standard OOM. 

It is still an open question whether HQMMs are a proper subset of OOMs.

\vspace{-2mm}
\paragraph{An alternate formulation of HQMMs} 
Prior work on HQMMs have represented these models in the so-called operator-sum representation \citep{srinivasan2018,monras2010hidden}. While the notion of operating on vectorized matrices is fairly common in quantum information (and was implicitly used for HQMMs in \citet{NIPS2018_8235}), L-HQMMs are a novel formulation of HQMMs. We now derive the operator-sum representation of HQMMs from L-HQMMs, showing that the two are equivalent.

From Definition~\ref{def:lhqmm}, we know that any model equivalent to L-HQMMs must have CP, TP, and HP operators. From Choi's theorem \citep{Choi1975CompletelyPL}, we know that any map which can be expressed in the operator-sum representation $\mathcal{K}(\pmb{\rho}) = \sum_w \mathbf{K}_w~\pmb{\rho}~\mathbf{K}_w^\dagger$ is guaranteed to be CP, and will preserve the PSD nature of any input matrix. In the context of CP maps, the operator matrices $\mathbf{K}_w$ are commonly called Kraus operators \citep{Kraus1971}. The quadratic application of operator preserves the Hermiticity of $\pmb{\rho}$. Thus, the operator-sum representation is particularly appealing because it guarantees the CP and HP constraints by construction. Note that this representation of CP maps is merely a vectorization of the Liouville form
\begin{align*}
    \text{vec}\left( \sum_w~\mathbf{K}_w ~\pmb{\rho}~\mathbf{K}_w^\dagger \right)~=~\sum_w~(\mathbf{K}_w^*~\otimes~ \mathbf{K}_w) \vec{\rho}~=~ \mathbf{L} \vec{\rho}
\end{align*}
Thus, the action of a Liouville superoperator $\mathbf{L}_y$ corresponding to the observable $y$ on $\vec{\rho}$ can can be equivalently represented by a set of Kraus operators $\{\mathbf{K}_{y,w_y}\}$ acting on the density matrix $\pmb{\rho}$, where the cardinality of this set $|w_y|$ is determined by the Schmidt-rank (or Kraus-rank, as we soon explain) of $\mathbf{L}_y$. The Schmidt-rank is analogous to the rank revealed by an SVD, but for a decomposition into a Kronecker product of two vector spaces.

Finally, the operator-sum representation also provides a convenient constraint way of ensuring the TP constraint: the full set of Kraus operators across all observables must satisfy $\sum_{y,w_y} \mathbf{K}_{y,w_y}^\dagger \mathbf{K}_{y,w_y} = \mathds{I}$ \citep{nielsen_chuang_2010}. Note that this condition essentially generalizes condition (ii) for NOOMs in Definition~\ref{def:nooms} to allow multiple operators per observable. We can now define HQMMs using the Kraus operator-sum representation, as given in \citet{srinivasan2018}.
\begin{definition}[K-HQMMs]
\label{def:khqmm}
An $n$-dimensional Kraus-Hidden Quantum Markov Model with a set of discrete observations $\mathcal{O}$ is a tuple $(\mathbb{C}^{n\times n}, \{\mathbf{K}_{y,w_y}\}_{y \in \mathcal{O}}, \pmb{\rho}_0, \text{tr}(\cdot))$ where initial state $\pmb{\rho}_0 \in \mathbb{C}^{n \times n}$ and Kraus operators $\{\mathbf{K}_{y, w_y}\}_{y \in \mathcal{O}, w_y \in \mathds{N}} \in \mathds{C}^{n \times n}$ satisfy the following constraints:
\begin{enumerate}[(i)]
    \item  $\pmb{\rho}_0$ is a Hermitian PSD matrix of arbitrary rank, 
    \item Normalized Initial State: $\text{tr}(\pmb{\rho}_0) = 1 $, 
    \item Normalized marginal over observations (TP): $\sum_{y, w} \mathbf{K}_{y,w}^\dagger \mathbf{K}_{y,w} = \mathds{I}$.
\end{enumerate}
\end{definition}

The state update after observing $y$ is computed as
\begin{equation}
    \pmb{\rho}_t = \frac{\sum_{w_y}\mathbf{K}_{y,w_y} ~\pmb{\rho}_{t-1}~ \mathbf{K}_{y,w_y}^\dagger}{\text{tr}\left(\sum_{w_y} \mathbf{K}_{y,w_y} ~\pmb{\rho}_{t-1}~ \mathbf{K}_{y,w_y}^\dagger\right)} 
    \label{eq:hqmm_joined},
\end{equation}
and probability of a given sequence is given by:
\begin{small}
\begin{equation}
    P(\bar{y}) = \text{tr}\left(\sum_{w_{y_t}} \mathbf{K}_{y_t,w_{y_t}} \ldots \left(\sum_{w_{y_t}} \mathbf{K}_{y_1,w_{y_1}} \pmb{\rho}_{0} \mathbf{K}_{y_1,w_{y_1}}^\dagger\right) \ldots \mathbf{K}_{y_t,w_{y_t}}^\dagger\right)
\end{equation}
\end{small}
The K-HQMM representation was used by \citet{srinivasan2018} to show that any $n$ dimensional HMM can be written as an equivalent $n^2$ dimensional K-HQMM, while there were HQMMs like the NOOM probability clock (trivially a HQMM) that required infinite-dimensional HMMs; hence HMM $\subset$ HQMM.
\vspace{-2mm}
\paragraph{Uniqueness of L-HQMMs}
Note that the Kraus operator sum formulation of K-HQMMs does not uniquely define a CP map; it can be equivalently defined using different sets of Kraus operators (with possibly different cardinalities). Thus, it is not evident how one might compare two K-HQMMs. On the other hand, the Liouville superoperator is the unique representation of a CP map, and can be canonically factorized as follows \citep{Wood2011,Miszczak2011SingularVD}:
\vspace{-2mm}
\begin{equation}
\mathbf{L}~=~\sum_{w}~\mathbf{K}_{w}^* \otimes \mathbf{K}_{w}~=~\sum_{i=1}^r~\gamma_{i} (\mathbf{K}_{i}^* \otimes \mathbf{K}_{i})
\label{eq:L_decomp_noom}
\end{equation}
where $\{\mathbf{K}_{w}\}$ is a set of arbitrary Kraus operators, $\{\sqrt{\gamma_{i}}~\mathbf{K}_{i}\}$ the set of \emph{canonical} Kraus operators defining the CP map, and $r$ the `Kraus-rank' of the CP map. 
It is a well known result that these factors can be computed directly from an SVD of the Choi matrix (the `reshuffled' Liouville matrix); the $i$-th singular value and vector pair correspond to $\gamma_i$ and $\text{vec}(\mathbf{K}_i)$ \citep{Wood2011,Miszczak2011SingularVD}. We illustrate this process in Figure~\ref{fig:cp_map_forms}.
\begin{figure}[ht]
 \centering
    \includegraphics[scale = 0.35]{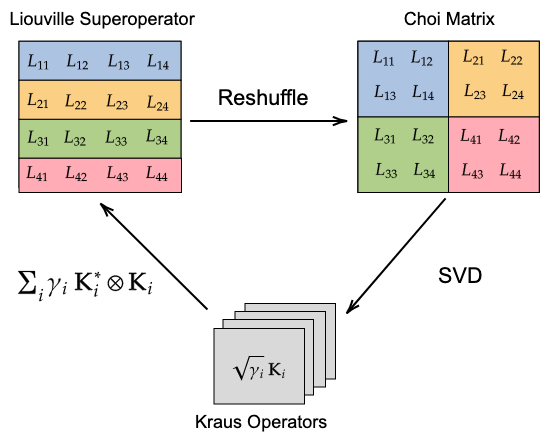}
    \vspace{-2mm}
      \caption{\textbf{Three equivalent formulations of a CP map}: The unique canonical operator sum representation of a CP map can be obtained by performing an SVD of its Choi matrix, which is obtained by reshuffling its Liouville superoperator.\vspace{-4mm}}
      \label{fig:cp_map_forms} 
\end{figure}
The Kraus-rank of a CP map is equal to the rank of the Choi matrix, and is equal to the minimum number of Kraus operators required to express the operation. Since the Liouville superoperator (or the Choi matrix) uniquely defines a CP map, we can use this representations to compare two L-HQMMs.

\paragraph{HQMMs \& NOOMs}
We have shown that $n$-dimensional NOOMs form a subset of $n$-dimensional HQMMs through generalization. Prior work by \citet{srinivasan2018} used `HQMMs' and `NOOMs' somewhat ambiguously, differentiating them primarily by the field over which they are definied ($\mathds{R}$ or $\mathds{C}$). In this paper we have used the original formulation of NOOMs \citep{Zhao2010b} to draw a clearer distinction, whereby NOOMs are simply HQMMs with rank-1 vectorized initial state and Kraus-rank 1 operators.  Particularly, for a fixed latent dimension $n^2$ of the vectorized density matrix, an HQMM allows for a greater diversity of both states and dynamics.

First, note that the valid states of HQMMs are Hermitian PSD matrices with unit trace, also known as mixed density matrices in quantum mechanics \citep{nielsen_chuang_2010}. By contrast, the valid states for NOOMs correspond to the set of pure density operators (with rank $1$). Since these operators encode the probability distribution of the latent state, we see that HQMM states can represent mixture distributions of NOOM states. Formally, the set of rank-1 density matrices are extremal points of the spectraplex defined by arbitrary rank density matrices. This gives us some geometric intuition for why HQMMs have a richer state space than NOOMs.

Second,
HQMMs can have an arbitrary number of Kraus operators per observable while NOOMs are restricted to one to preserve rank-1 states. This indicates that the evolution associated with individual observations in an $n$-dimensional NOOM is restricted to dynamics corresponding to rank $1$ Choi matrices. Thus, an $n$-dimensional HQMMs with arbitrary Kraus rank can encode richer dynamics than an $n$-dimensional NOOM.

\section{Learning HQMMs}
Having characterized the expressiveness of HQMMs, we now turn to the task of learning them from data.

\vspace{-2mm}
\paragraph{The Learning Problem}
We use the negative log-likelihood of the data as our loss function, which can be written as a function of the set of Kraus operators $\{\mathbf{K}_{y,w}\}$ as follows \citep{srinivasan2018}:

\vspace{-4mm}
\begin{small}
\begin{equation}
    \mathcal{L} = -\ln \text{ tr}\left(\sum_{w}\mathbf{K}_{y_n,w}\ldots \left(\sum_{w}\mathbf{K}_{{y_1},w}~\pmb{\rho}_0~\mathbf{K}_{{y_1},w}^\dagger\right)\ldots \mathbf{K}_{y_n,w}^\dagger\right)
    \label{eq:loss}
\end{equation}
\end{small}
Note that the learned Kraus operators must satisfy the TP constraint $\sum_{y,w} \mathbf{K}^\dagger_{y,w} \mathbf{K}_{y,w} = \mathds{I}$. The problem of learning a set of $N$ trace-preserving $n\times n$ Kraus operators can equivalently be framed as one of learning a matrix $\pmb{\kappa} \in \mathds{C}^{nN\times n}$ on the Stiefel manifold i.e., that satisfy $\pmb{\kappa}^\dagger \pmb{\kappa} = \mathds{I}$, where $\pmb{\kappa}$ can be block-partitioned row-wise into the $N$ Kraus operators that parameterize the HQMM. Both the previous and this paper's approach 
begin with an initial guess $\pmb{\kappa}_0$ with a pre-determined partitioning into the Kraus operators we wish to learn, and iteratively make changes to the guess to maximize the log-likelihood (a function of the Kraus operators).
\paragraph{The Previous Approach} Since $\pmb{\kappa}$ is a matrix with orthonormal columns, any initial guess $\pmb{\kappa}_0$ is a unitary transformation away from the true $\pmb{\kappa}^*$ that maximizes the log-likelihood. The existing method \citep{srinivasan2018} iteratively finds a series of Givens rotations that locally increase the log-likelihood. 
However, a Givens rotation only changes two rows of $\pmb{\kappa}$ at a time, making this approach prohibitively slow for learning large $\pmb{\kappa}$ matrices. Furthermore, since these two rows are picked at random, this approach is not guaranteed to step towards the optimum at every iteration. 
\vspace{-2mm}
\paragraph{Retraction-Based Optimization} We propose directly learning $\pmb{\kappa}$ using a gradient-based algorithm. Note that since $\mathcal{L}$ is a function of complex matrices, the direction of steepest descent corresponds to the gradient with respect to the complex conjugate of the Kraus operators \citep{Hjorungnes2007}. Most existing algorithms that constrain gradient updates on the Stiefel manifold are either projection-like (which re-orthogonalize the naive gradient descent updates) or geodesic-like (which directly generate updates on the manifold itself) \citep{Jiang2013AFO}. We picked the geodesic like algorithm proposed by \citet{Wen2013} as it performed best on our datasets (see Appendix~\ref{app:opt_algo_compare} for details).

Given a gradient $\mathbf{G}$ of the loss function $\mathcal{L}$ with respect to parameters $\pmb{\kappa}$, we wish to find the trajectory $\gamma(\tau)$ for some step size $\tau$ that corresponds to stepping along the direction of the gradient while staying on the Stiefel manifold.
The Wen-Yin algorithm achieves this through \textit{retractions} that smoothly map $\mathbf{G}$ or any point on a manifold's tangent bundle onto the manifold itself, while preserving the descent direction at that point \citep{Absil2007}. The constrained update $\gamma(\tau)$ on the Stiefel manifold with respect to an initial feasible solution $\pmb{\kappa}_0$ is
\vspace{-2mm}
\begin{equation}
\gamma(\tau) = \pmb{\kappa}_0 - \tau \mathbf{U} \left( \mathds{I} + \frac{\tau}{2} \mathbf{V}^\dagger \mathbf{U} \right)^{-1} \mathbf{V}^\dagger \mathbf{\kappa}_0,
\label{eq:stiefel_smw_curve}
\end{equation}
where $\mathbf{U} = [\mathbf{G}~|~\pmb{\kappa}_0]$, $\mathbf{V} = [\pmb{\kappa}_0~|-\mathbf{G}]$, and $\mathbf{G}$ is the gradient at $\pmb{\kappa}_0$. This update requires the inversion of a $2n\times 2n$ matrix. $\gamma(\tau)$ is the trajectory obtained by smoothly retracting the gradient onto the manifold, giving the direction of steepest descent to feasibly optimize Equation ~\ref{eq:loss}. Consequently, Equation \ref{eq:stiefel_smw_curve} guarantees that when $\tau = 0$, it has the same direction as $\mathbf{G}$, and $\gamma(\tau)^\dagger \gamma(\tau) = \mathbb{I}$ for any $\tau$.

This method can be combined with a gradient descent scheme (summarized in Algorithm \ref{alg:learn_hqmm}) to learn feasible parameters for HQMMs. In our experiments with $N = |\mathcal{O}|w$, and for a batch with $m$ sequences of length $l$, we compute the loss using Equation~\ref{eq:loss} in $O(mlwn^3)$ time, perform auto-differentiation, and obtain a retraction using Equation~\ref{eq:stiefel_smw_curve} in $O(|\mathcal{O}|wn^3)$ time.

\begin{algorithm}[ht]
  \caption{Learning HQMMs using Constrained Optimization on the Stiefel Manifold}
  \label{alg:learn_hqmm}
  \textbf{Input:} Training data $\mathbf{Y} \in \mathds{N}^{M \times \ell}$, where $M$ is the $\#$ of data points and $\ell$ is the $\#$ of observed variables in the HQMM\\
  \textbf{Hyperparameters:} $\mathbf{\tau}$ (learning rate), $\alpha$: (learning rate decay), $B$ (number of batches), $E$ (number of epochs)\\
  \textbf{Output:} $\{\mathbf{K}_i\}_{i=1}^{|\mathcal{O}|w}$ 
\begin{algorithmic}[1]
\STATE \textbf{Initialize:} Complex orthonormal matrix on Stiefel manifold $\pmb{\kappa} \in \mathds{C}^{|\mathcal{O}|wn \times n}$ and partition into Kraus operators  $\{\mathbf{K}_i\}_{i=1}^{|\mathcal{O}|w}$, with  $\mathbf{K}_i \in \mathds{C}^{n\times n}$
    
  \FOR{$epoch$ = 1: $E$ }
  \STATE Partition training data $\mathbf{Y}$ into $B$ batches $\{\mathbf{Y}_b\}$
    \FOR{$b$ = 1:$B$}
       \STATE Compute gradient $\mathbf{G}_i \gets$ $\frac{\partial \mathcal{L}}{\partial \mathbf{K}_i^*}$ for batch $\mathbf{Y}_b$ and loss function $\mathcal{L}$
       \STATE Compute $\frac{\partial \mathcal{L}}{\partial \mathbf{\kappa}} = \mathbf{G} \leftarrow \begin{bmatrix}\mathbf{G}_1 &\cdots & \mathbf{G}_{|\mathcal{O}|w} \end{bmatrix}^T$
       \STATE Construct $\mathbf{U} \leftarrow [~\mathbf{G}~|~\pmb{\kappa}~]$, $ \mathbf{V} \leftarrow [~\pmb{\kappa}~|~-\mathbf{G}~]$
       \STATE Update $\pmb{\kappa} \leftarrow \pmb{\kappa} - \tau \mathbf{U} \left( \mathds{I} + \frac{\tau}{2} \mathbf{V}^\dagger \mathbf{U} \right)^{-1} \mathbf{V}^\dagger \pmb{\kappa}$
    \ENDFOR
    \STATE Update learning rate $\tau = \alpha \tau$
    \STATE Re-partition $\pmb{\kappa}$ into $\{\mathbf{K}_i\}$
  \ENDFOR
  
  \STATE \textbf{return} $\{\mathbf{K}_i\}$
\end{algorithmic}
\end{algorithm}
\vspace{-2mm}
\section{Experimental Results}
\vspace{-2mm}
To show the superior performance of the retraction-based algorithm for  constrained optimization on the Stiefel manifold ({\bf COSM}) over the previous Givens Search ({\bf GS}) method in learning HQMMs, we evaluate their accuracy and run-time on two datasets. The first is the synthetic dataset used by \citet{srinivasan2018} (code obtained from Github) that was generated by an HMM. The second is a real-world dataset, on which the GS approach is prohibitively slow; demonstrating the scalability of COSM. In Appendix~\ref{app:hqmm_data}, we also present results where COSM outperforms GS on the synthetic data used by \citet{srinivasan2018} that was generated by an HQMM representing a quantum mechanical process. \footnote{A preliminary version of these experimental results appeared in \citet{adhikary2019learning}. Code available at \texttt{https://github.com/sandeshAdhikary/learning-hqmms-stiefel-manifold}}

\vspace{-2mm}
\paragraph{Training} For all our HQMMs, we use the log-likelihood loss function from Equation \ref{eq:loss}. We initialize the latent state $\pmb{\rho}_0$ as a random Hermitian PSD matrix using the QETLAB toolbox \citep{qetlab}, and $\pmb{\kappa}$ as a random orthonormal matrix. Except for very small models, COSM is fairly robust to random initializations (see Appendix~\ref{app:inits}). We compute the gradient of the loss function with respect to the complex conjugate of the Kraus operators using the Autograd package (which can handle complex differentiation), and vertically stack the gradients of the Kraus operators to construct the gradient $\mathbf{G}$ of the matrix $\mathbf{\kappa}$. To smoothen the trajectory we apply momentum  with $\beta = 0.9$ \citep{Rumelhart1998,Qian1999}, and re-normalize the gradient before and after the momentum update, making the magnitude of updates entirely dependent on step-size. We refer to HQMMs using the tuple $(n,s,w)-$HQMM,  where $n$ is the number of hidden states,  $s$ is the number of possible outputs (earlier denoted $|\mathcal{O}|$),  and $w$ is the number of Kraus operators per output, i.e., the Kraus-rank of the HQMM or dimension of the `environment' variable \citep{srinivasan2018}. Consequently, for an $(n,s,w)-$HQMM we have $\pmb{\kappa} \in \mathds{C}^{nsw \times n}$. We also provide the performance of HMMs trained using the Expectation-Maximization ({\bf EM}) algorithm (with 5 random restarts) for reference. Details of our hyperparameter tuning procedure and computing infrastructure are described in Appendix~\ref{app:hyperparams}.

\vspace{-2mm}
\paragraph{Metrics} On the synthetic HMM dataset, we use a scaled log-likelihood \citep{noomreport, srinivasan2018} independent of sequence length called description accuracy: $DA = f\left(1 + \frac{\log_s P(Y|\mathds{D})}{\ell}\right)$, where $f(\cdot)$ squishes the log-likelihood from $(-\infty, 1]$ to $(-1,1)$ (with $f(x) = \tanh({x/8})$ for $x \leq 0$, and $f(x) = x$ for $x > 0$). When $DA=1$, the model predicted the sequence with perfect accuracy, and when  $DA>0$, the model performed better than random. The error bars represent one standard deviation of the DA scores across many test samples. On the real-world dataset, we report the average accuracy for a classification problem.
\subsection{Synthetic HMM Data}
\vspace{-2mm}
For our first experiment, we generated data using the same synthetic HMM as \citet{srinivasan2018}, with 6 hidden states and 6 possible outputs. We show two things with the experiments on this dataset: 1) COSM finds better optima than GS, and 2) COSM is much faster than GS -- so much so that we could train larger HQMMs than were previously possible. We also investigate the effects of increasing model size by adding latent states ($n$) versus increasing the Kraus-rank ($w$). 

We used the same 20 training and 10 validation sequences of length 3000 used by \citet{srinivasan2018}, splitting up each sequence into 300 sequences and use a burn-in of 100. We trained HQMMs using the COSM approach for 60 epochs, and evaluated the model with the highest validation DA score on the test set. The results for this model are shown in  Figure~\ref{fig:synthetic_hmm_w1_results} and Figure~\ref{fig:synthetic_hmm_n5_results}.

\begin{figure}[ht]
    \centering
    \begin{subfigure}[t]{0.5\textwidth}
        \centering
        \includegraphics[scale = 0.28]{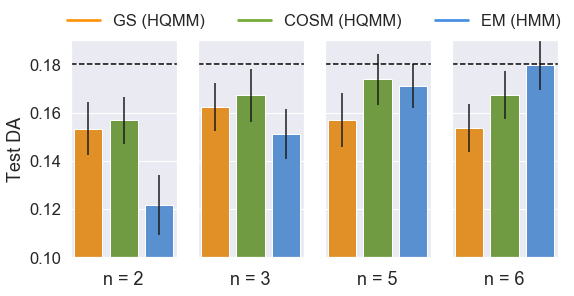}
                \caption{$w=1$}
      \label{fig:synthetic_hmm_w1_results}
    \end{subfigure}
    \begin{subfigure}[t]{0.5\textwidth}
        \centering
        \includegraphics[scale = 0.28]{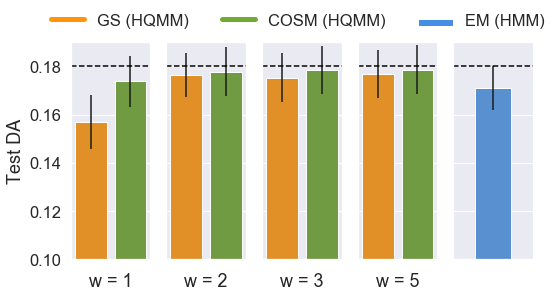}
        \caption{$n=5$}
      \label{fig:synthetic_hmm_n5_results}
    \end{subfigure}
    \caption{{\bf Test Set Performances on the Synthetic HMM Data}: The dashed line represents the test set performance of the true model (a (6,6)-HMM) that generated the data.}
\end{figure}

\vspace{-2mm}
\paragraph{COSM finds better optima than GS} As shown in Figure~\ref{fig:synthetic_hmm_w1_results}, HQMMs (with $w=1$) learned using COSM achieve better optima than HQMMs learned using GS for all $n$. As described in Section~\ref{sec:hqmms}, these models are essentially complex-valued NOOMs. We also confirm that as noted in \citet{srinivasan2018}, small HQMMs ($n \leq 5$) can model this data better than small HMMs, although this doesn't hold for $n=6$. However, we can take advantage of the additional Kraus-rank hyperparameter $w$ available to HQMMs to further improve performance, as shown in Figure~\ref{fig:synthetic_hmm_n5_results} for $(5,6,w)-$HQMMs (varying $w$). Also note that the number of parameters for an HQMM scales faster than for an HMM.
\vspace{-2mm}
\paragraph{COSM is much faster than GS} In Figure \ref{fig:hmm_results_training_times}, we plot the test set DA versus CPU training time for the smallest and largest models trained. To ensure a fair comparison, we train both approaches on sequences of length $300$ and a batch size of 30. Note that we pre-tune hyperparameters on the validation set, and the graphs show the changing test DA as the models are trained with these hyperparameters (test DAs were not used to tune hyperparameters).

\begin{figure*}[ht]
    \centering
    \includegraphics[scale=0.3]{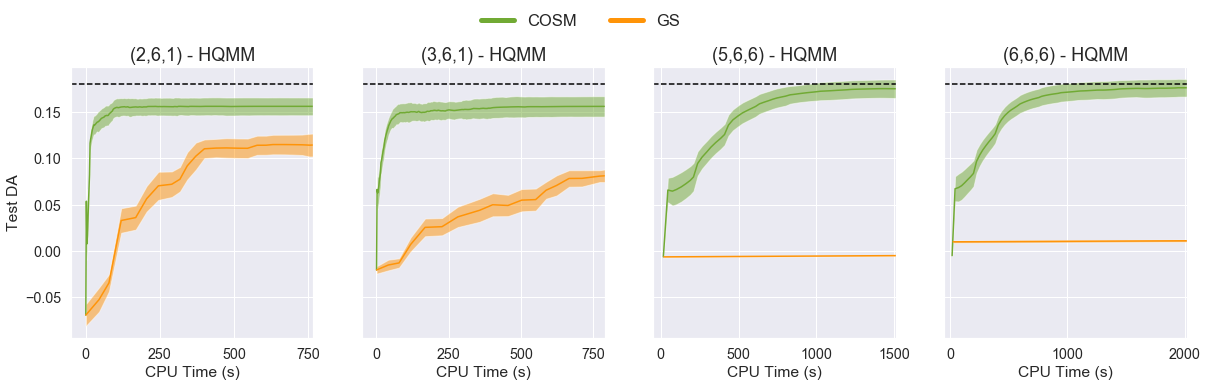}\vspace{-2mm}
    \caption{\textbf{COSM Learns More Accurate Models Faster than GS}: Test DA versus training time for various $(n,s,w)$-HQMMs trained on the synthetic HMM data. COSM converges to a better optimum faster than GS for all models; the dashed line represents the DA of the true data generating model.\vspace{-4mm}}
    \label{fig:hmm_results_training_times}
\end{figure*}

For all models, we see that COSM converges much faster than GS, and the difference in both speed and accuracy is especially pronounced for the larger models; COSM converges within a few hundred seconds, while GS yields very poor solutions even after 2000 seconds.
As the GS method can take days to converge for large models, we could not directly calculate a precise speedup. 

\citet{srinivasan2018} proved that a $(6,6,6)$-HQMM should be sufficient to fully model a $(6,6)$-HMM, but the GS method was too slow to train this model. With COSM, we are able to show that this theoretical guarantee holds in practice. In fact, we find that in practice a $(5,6,3)-$HQMM is sufficient to model our $(6,6)-$HMM.

\subsection{Splice Dataset}\vspace{-2mm}
For our second experiment, we use the real-world splice dataset \citep{Dua2017,splice_dataset}
consisting of DNA sequences of length 60, each element of which represents one of four nucleobases: Adenine (A), Cytosine (C), Guanine (G), and Thyamine (T). A DNA sequence typically consists of information encoded in sub-sequences (exons), that are separated by superfluous sub-sequences (introns). The task associated with this dataset is to classify sequences as having an exon-intron (EI) splice, an intron-exon (IE) splice, or neither (N), with 762, 765, and 1648 labeled examples for each label respectively. In addition to A, C, T and G, the raw dataset also contains some ambiguous characters, which we filter out prior to training. Our goal in this experiment is to demonstrate that we can use COSM to train HQMMs on real-world datasets which would have been too slow to train using GS.

We train a separate model for each of the three labels, and during test-time, choose the label corresponding to the model that assigned the highest likelihood to the given sequence. We train HQMMs using the COSM method and HMMs with the EM algorithm (with 5 random restarts) for reference. In Figure~\ref{fig:splice_results}, we report the average classification accuracies across all labels obtained with 5-fold cross validation. For reference, a random classifier achieves around $33.3\%$ accuracy.

\vspace{-2mm}
\begin{figure}[ht]
    \centering
    {\transparent{0.9}\includegraphics[scale = 0.3]{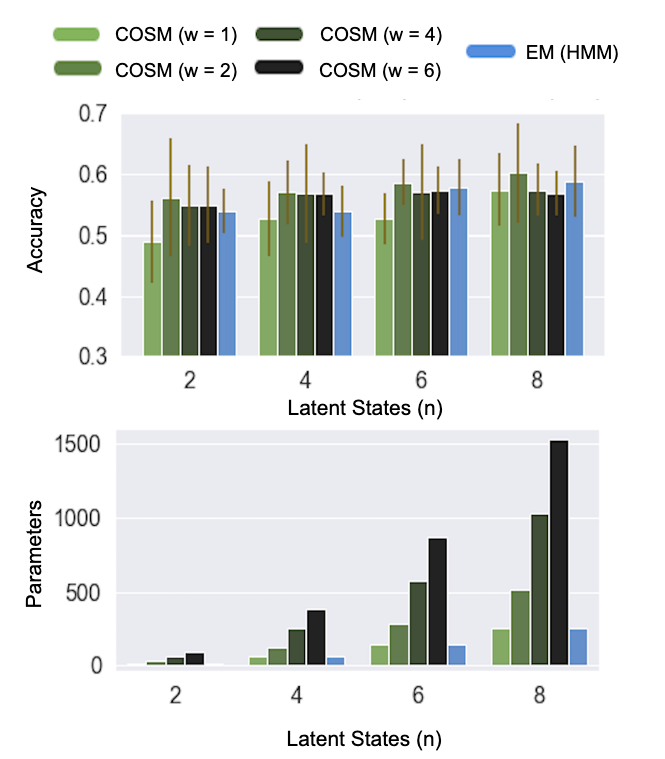}}\vspace{-2mm}
    \caption{\textbf{Average 5-fold Test Set Performance on the Splice Dataset} Test set accuracies (left) and number of parameters (right) for various HQMMs and HMMs trained using the COSM and EM algorithms respectively. Errorbars in the left graph represent the mean standard deviation across labels over the 5 folds. }
    \label{fig:splice_results}
\end{figure}
Note that 5-fold cross-validation is prohibitively time consuming for GS, even for models with a modest number of parameters. However, we are able to learn these  HQMMs with COSM. We also see that, (as before) there is a sizable marginal gain in DA when going from $w=1$ to $w=2$, with the benefits of increasing $w$ further being less clear. However unlike the previous experiment, we still see persistent gains by increasing $n$. Interpreting this in conjunction with the results in the previous section suggests that we have to tune both $n$ and $w$ depending on the dataset. We also find that for a given number of hidden states, COSM is able to learn an HQMM that outperforms the corresponding HMM, although this comes at the cost of a rapid scaling in the number of parameters. 
\section{Conclusion}\vspace{-2mm}
We showed that HQMMs are OOMs that generalize NOOMs, and that unlike prior approaches that avoid the NPP by design, HQMMs are able to model arbitrary HMMs as well. HQMMs expand the convex cone of valid states from rank-1 PSD matrices in (complex valued) NOOMs to arbitrary rank Hermitian PSD matrices. We also formulated the unique Liouville representation of an HQMMs, which allows direct comparison between models,
and also simplifies theoretical analysis connecting them to general OOMs. Future work could focus on identifying the exact relationship bewtween NOOMs and HMMs, and whether arbitrary OOMs can be converted to HQMMs.

We also introduced a retraction-based learning algorithm that directly constrains gradient updates to the Stiefel manifold to learn feasible HQMMs, and presented experimental results on a synthetic and a real-world dataset. In the process, we showed that the proposed algorithm outperforms the prior approach in terms of both speed and accuracy, and so were able to train HQMMs that were previously too large to train. This also  suggests that directly optimizing the parameters is a better strategy  than finding small, local unitary  rotations of the matrix on the Stiefel manifold. One downside is the rapid scaling of parameters in HQMMs, and it would be interesting to investigate approximations that may produce similar performance with far fewer parameters. It would also be useful to dynamically learn the Kraus-rank $w$ instead of tuning it as a hyperparameter. Other future work could develop new QGM models defined via Kraus operators, which can be learned using our approach.

\bibliographystyle{apalike}
\bibliography{references.bib}

\clearpage
\newpage 
\begin{appendix}
\section{Experiment on Synthetic HQMM Data}
\label{app:hqmm_data}
As an additional experiment on a purely quantum mechanical dataset, we compared the COSM and GS methods on data generated using the synthetic HQMM with 2 hidden states and 6 possible outputs in \citet{srinivasan2018}. The  data generation process is inspired by the well known Stern-Gerlach experiment \citep{gerlach1922experimentelle} in  quantum mechanics, and at least 4 hidden states are required to model it. \citet{srinivasan2018} demonstrated that HQMMs \emph{learned} from such synthetic data showed in practice the same benefits that held in theory. Our goal is to verify that the COSM method performs at least as well as the GS method on a dataset well-suited to the HQMM model class.

We used the same synthetic dataset used by \citet{srinivasan2018}, with 20 training and 10 validation sequences of length 3000. We further split up each sequence into 300 sequences and use a burn-in of 100, instead of training on 3000-length sequences with a burn-in of 1000. This reduced training time without impacting accuracy or the amount of training data processed. We trained HQMMs using the COSM approach for 60 epochs, and saved the model that yielded the highest DA score on the validation set; we used this model to evaluate on the test set of 10 sequences of length 3000 (with burn-in 1000). The results for this model are shown in  Figure~\ref{fig:synthetic_hqmm_results}.
We see that the COSM method achieves slightly better DA compared to the GS method. We confirm that as seen in \citet{srinivasan2018}, we need a $6-$state HMM to model this $2-$state HQMM.
\begin{figure}[ht]
 \centering
    \includegraphics[scale = 0.3]{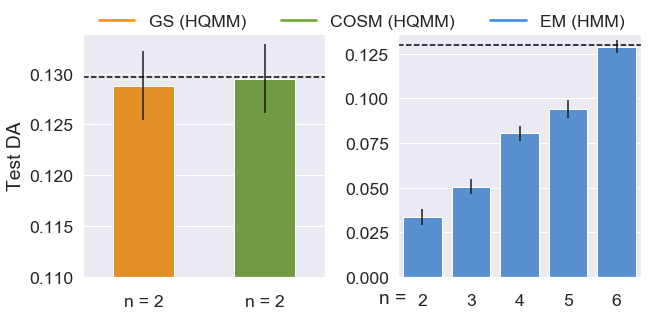}
      \caption{\textbf{Test Set Performance on the Synthetic HQMM Data}: The dashed line represents the test set performance of the true model that generated the data. The GS and COSM methods were used to learn ($2,6,1$)-HQMMs, while EM was used to learn HMM models with varying number of hidden states ($n$). A $6-$state HMM model was needed to match a $2-$state HQMM.}
      \label{fig:synthetic_hqmm_results} 
\end{figure}

\section{Updates on the Stiefel Manifold}
\label{app:opt_algo_compare}
Algorithms that constrain parameters on the Stiefel manifold generally are either projection-like (which re-orthogonalize the naive gradient descent updates) or geodesic-like (which directly generate updates on the manifold itself). Among geodesic-like algorithms, those proposed by \citet{Wen2013} and \citet{Jiang2013AFO} are the current state-of-the-art approaches. In the regime of tall-and-skinny matrices in our problem, these two are theoretically equivalent and have the same computational complexity $O(7Nn^2)$, where $n$ is the latent dimension and $N = sw$. By comparison, the canonical gradient projection algorithm has a slightly lower computational complexity of $O(3Nn^2)$ \citep{Jiang2013AFO}. We compared these three update schemes to project or retract gradients onto the Stiefel manifold. The exact update schemes for all three methods can be found in \citet{Jiang2013AFO}.

We trained $9$ HQMM models for both the synthetic HQMM and synthetic HMM datasets using these 3 update schemes. As shown in the results in Figures~\ref{fig:alt_updates_hqmm} and \ref{fig:alt_updates_hmm}, the three methods are very similar both in terms of speed and the final solution quality for our benchmark datasets. Since the Wen-Yin update was slightly faster, especially for larger models on the synthetic HQMM data, we used it over the alternatives.
\begin{figure*}[ht]
    \centering
    \begin{subfigure}[t]{0.5\textwidth}
        \centering
            \includegraphics[width=0.8\textwidth]{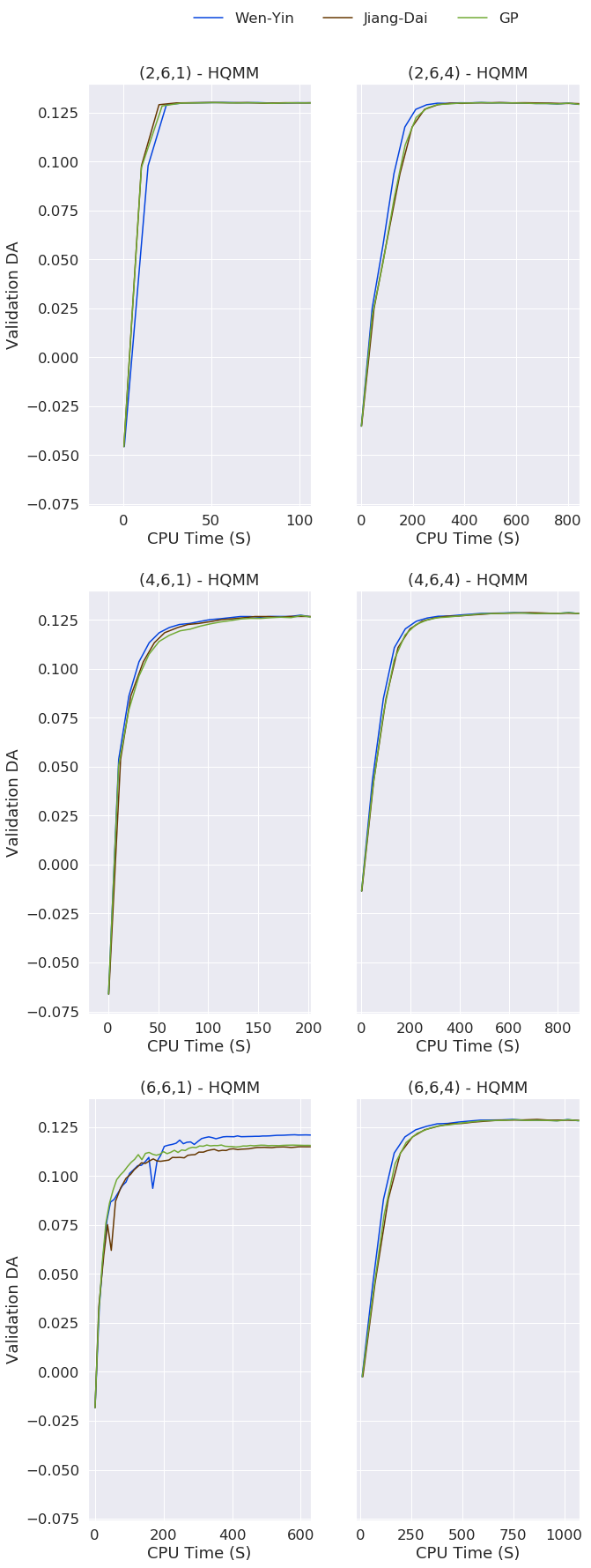}
              \caption{Results for the Synthetic HQMM Data}
          \label{fig:alt_updates_hqmm}
    \end{subfigure}%
    ~ 
    \begin{subfigure}[t]{0.5\textwidth}
        \centering
    \includegraphics[width=0.8\textwidth]{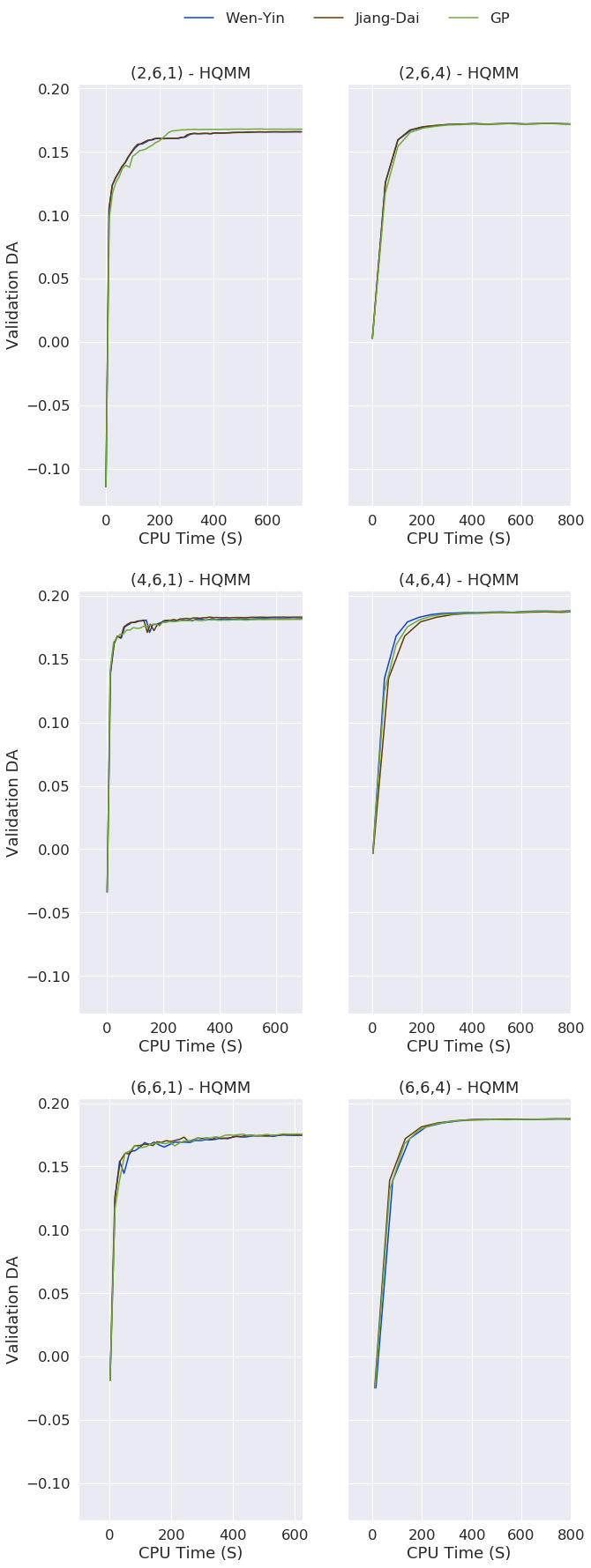}
      \caption{Results for the Synthetic HMM Data}
  \label{fig:alt_updates_hmm}
    \end{subfigure}
    \caption{\textbf{Alternative Schemes to Constrain Updates on the Stiefel Manifold} Validation set accuracies obtained for HQMMs trained using different update schemes. All schemes provide similar speed and accuracy, but the Wen-Yin update outperforms the others by a small margin.}
    \vspace{-4mm}
\end{figure*}

\section{Sensitivity to Initialization}
\label{app:inits}
The COSM algorithm begins with an initial guess of the optimal parameters $\pmb{\kappa}$ and a random intial density matrix $\pmb{\rho}$. By `burning-in' a reasonable number of initial entries in sequences, we minimize the effect of randomly initializing $\pmb{\rho}$. To investigate the sensitivity of COSM to initializations of $\pmb{\kappa}$, we trained models on the synthetic HQMM and HMM datasets over $3$ random seeds. As shown in the results in Figure~\ref{fig:random_inits_hqmm} and \ref{fig:random_inits_hmm}, COSM is sensitive to random initializations for the smallest $(2,6,1)$ model, but the variance in DA scores quickly decrease with an increase in model size, both as a function of $n$ and $w$. We observe even lower variance across different initializations for the synthetic HMM data in Figure \ref{fig:random_inits_hmm}.

\begin{figure*}[ht]
    \centering
    \begin{subfigure}[t]{0.5\textwidth}
        \centering
            \includegraphics[width=0.85\textwidth]{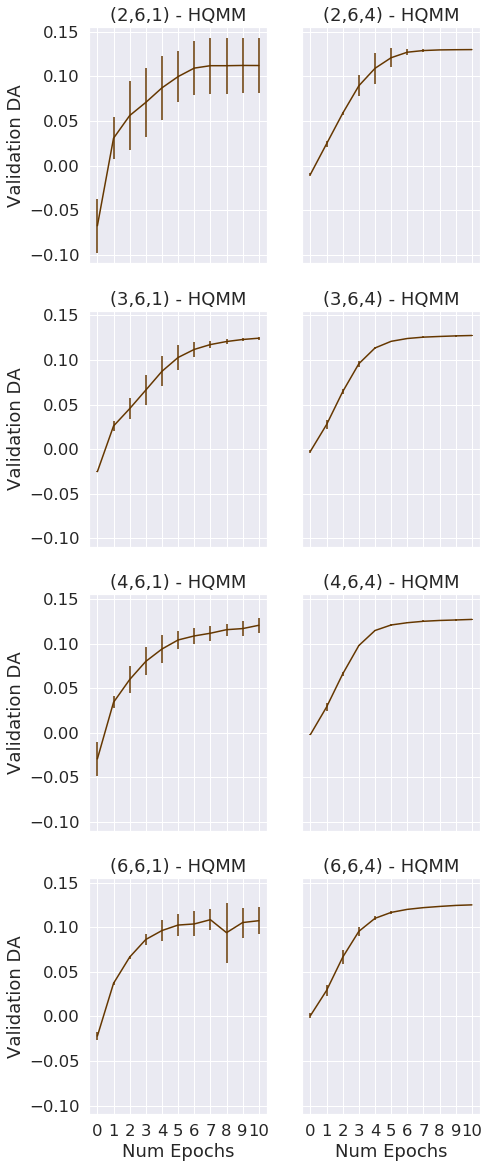}
              \caption{Synthetic HQMM Data}
          \label{fig:random_inits_hqmm}
    \end{subfigure}%
    ~ 
    \begin{subfigure}[t]{0.5\textwidth}
        \centering
    \includegraphics[width=0.85\textwidth]{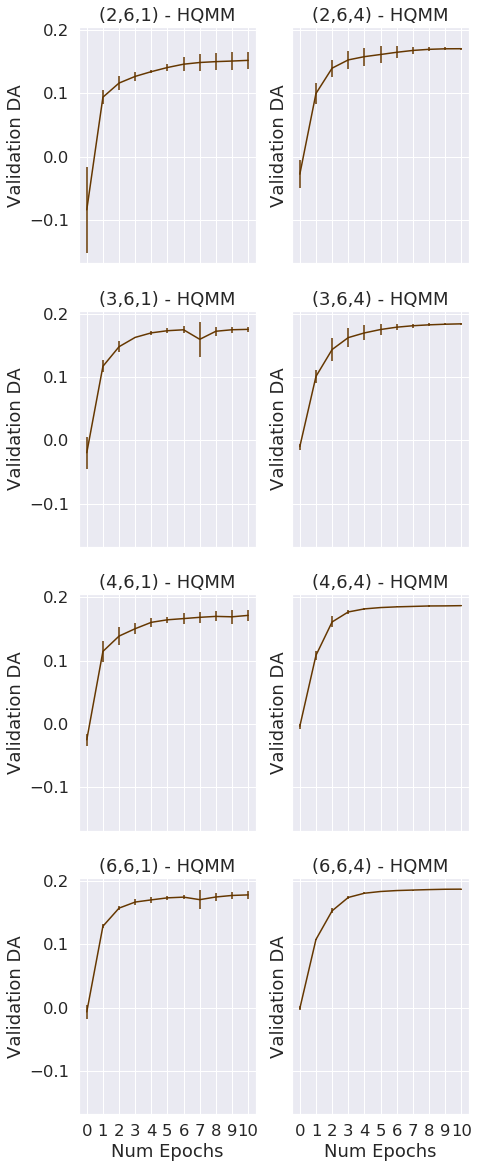}
      \caption{Synthetic HMM Data}
  \label{fig:random_inits_hmm}
    \end{subfigure}
    \caption{\textbf{COSM's Sensitivity to Random Initializations of $\kappa$} Validation set accuracies obtained across $10$ epochs for HQMMs trained on $3$ different random initializations. COSM is sensitive to $\pmb{\kappa}$ initialization for the smallest models, but is fairly robust for larger models.}
\end{figure*}

\section{Hyperparameter Selection}
\label{app:hyperparams}
To facilitate a clear comparison with GS, we used the same batch size as in \citet{srinivasan2018}, and tuned the step-size $\tau$ and decay rate $\alpha$ for all HQMM models. We started by manually tuning models, and identified that all models tended to converge to good solutions with the following hyperparameters: $\tau = 0.75$ and $\alpha = 0.92$ for the synthetic datasets, and 
$\tau = 0.8$ and $\alpha  = 0.9$ for the splice dataset. We trained baseline models using these parameters, and then randomly searched for better configurations around these values. 

For the synthetic datasets, we fixed the batch size at $20$ and randomly sampled $\tau$ between $0.55$ and $0.95$, and $\alpha$ between $0.9$ and $0.99$. As we wanted to explore many hyperparameter settings, we only trained on $3$ random batches in every epoch. For the splice dataset, we fixed the batch size at $200$ and randomly sampled $\tau$ between $0.7$ and $0.9$ and $\alpha$ between $0.88$ and $0.92$. Since each splice model required learning three separate HQMMs across multiple folds, we tested fewer hyperparameter settings across a smaller search space. We also trained on a single random batch every epoch across $2$ folds.

Given the large number of models that we needed to evaluate, we used the Hyperband scheduling technique \citep{Li2017HyperbandAN} to quickly sample through many hyperparameter configurations. For each model, we began by running $3$ epochs for each of the $k$ randomly selected configurations, and removed $k/3$ of them with the lowest validation DA scores. In the next round, we ran the remaining configurations for a larger number of iterations, and again removed the bottom third of the configurations with the lowest scores. We repeated this strategy until only one configuration remained, and saved the one with the highest validation DA throughout the tuning protocol. We searched across $27$ and $9$ random configurations for the synthetic and the splice datasets respectively. As an example, for the synthetic datasets we trained $27$ models for $3$ epochs, followed by the $9$ best models for $9$ epochs, followed by the $3$ best models for $9$ epochs, and the final best model for $27$ epochs. In Table~\ref{table:hyperparams}, we report the hyperparameters obtained through Hyperband that outperformed the default configuration. For models not listed in the table, the default configuration resulted in the best performance.

All our experiments were performed on a desktop with 8 Intel Core i7-7700K 4.20 GHz CPUs, and 31.3 GB RAM. All models are trained in MATLAB, but the gradient computation happens in Python.

\begin{table}[]
\centering
\caption{\textbf{Hyperparameter Selection} The best performing step sizes ($\tau$) and decay rates ($\alpha$) for various COSM models. For models not listed here, the default hyperparameters $(\tau = 0.75, \alpha = 0.92)$ and $(\tau = 0.8, \alpha = 0.9)$ yielded the best results for the synthetic datasets and the splice dataset respectively.}

\begin{tabular}{|c|c|c|c|c|c|}
\hline
\multicolumn{1}{|c|}{\bf Dataset}                        & $n$ & $s$ & $w$ & {\bf $\tau$} & {\bf $\alpha$} \\ \hline
\multicolumn{1}{|l|}{Synthetic HQMM}                 & 2 & 6 & 1 & 0.75      & 0.92       \\ \hline
\multicolumn{1}{|l|}{\multirow{5}{*}{Synthetic HMM}} & 2 & 6 & 1 & 0.95      & 0.99       \\  
\multicolumn{1}{|l|}{}                               & 4 & 6 & 6 & 0.95      & 0.96       \\ 
\multicolumn{1}{|l|}{}                               & 5 & 6 & 1 & 0.55     & 0.96      \\ 
\multicolumn{1}{|l|}{}                               & 5 & 6 & 2 & 0.95      & 0.98       \\ 
\multicolumn{1}{|l|}{}                               & 5 & 6 & 6 & 0.95      & 0.99       \\ \hline
{\multirow{7}{*}{Splice}}                             & 2 & 4 & 1 & 0.70          &   0.90         \\  
                                                     & 2 & 4 & 2 &  0.85         &     0.92       \\  
                                                     & 2 & 4 & 6 &     0.85      & 0.92            \\ 
                                                     & 4 & 4 & 1 &    0.90       & 0.92            \\  
                                                     & 4 & 4 & 4 &    0.90       & 0.90            \\  
                                                     & 6 & 4 & 4 &    0.70     &      0.90        \\ 
                                                     & 8 & 4 & 1 &0.90           &       0.90      \\ 
                                                     \hline
\end{tabular}
\label{table:hyperparams}
\end{table}

\section{Estimating Speedup}
\label{app:speedup}
Since the GS method can take days to converge to the final solution for large models such as $(6,6,6)$-HQMM, it was not feasible to compute a direct speed up comparing its convergence time to COSM across most models. Thus, we estimate the speed-up offered by COSM by fitting a linear model to the DA trajectory of models learned by the GS method. Specifically, for a given HQMM model, we train both COSM and GS on the synthetic HMM data until one of them converges within a tolerance of $10^{-5}$ in DA scores. Since COSM always converges first, we take the DA scores achieved by GS in its last $10$ steps and fit a linear model to it. We then extrapolate this linear model to estimate the time it would take for GS to reach some fraction of the solution DA reached by COSM. Note that a linear fit is an optimistic assumption of GS convergence time, meaning we are going to \emph{understate} how much faster COSM is compared to GS. Finally, we estimate the speed up offered by COSM as the ratio of the (estimated) convergence time for GS and the actual convergence time for COSM. In Figure~\ref{fig:speed_up}, we plot this estimated speed up with varying number of parameters (both as functions of $n$ and $w$) for different solution fractions. For a solution fraction of $1$, we record speedups greater than $150\times$ for the largest HQMMs trained. Furthermore, COSM offers comparable increase in speed up as parameters grow either by virtue of increasing the number of latent states $n$ or the Kraus-rank $w$.

\begin{figure*}[ht]
  \centering
    \includegraphics[width=0.65\textwidth]{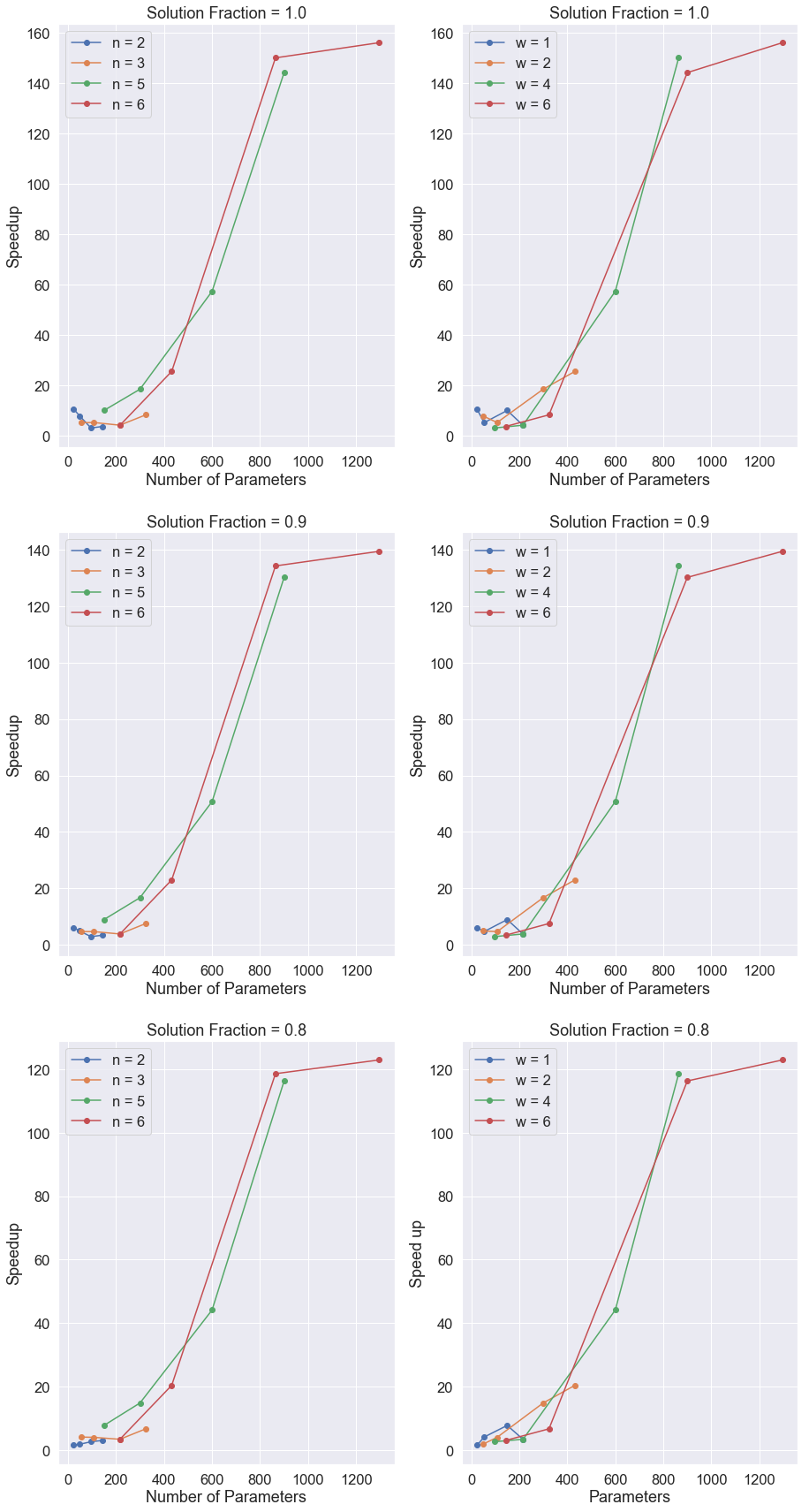}
      \caption{\textbf{Estimated Speedup of COSM over GS}: Estimated speedups of COSM over GS for various solution fractions. As seen in the plots for solution fraction of $1$, GS can take more than $150$ times the convergence time for COSM to reach the latter's final solution quality.}
  \label{fig:speed_up}
\end{figure*}

\end{appendix}

\end{document}